\documentclass{llncs}
\usepackage{makeidx}  
\usepackage{subfig}
\usepackage{graphicx}
\usepackage{amsfonts}
\usepackage{mathtools}
\usepackage{color}

\def\R{\bbbr}

\def\Nor{\mathcal{N}}
\def\I{\mathrm{I}}

\def\m{\mathrm{m}}

\def\F{\mathcal{F}}
\def\G{\mathcal{G}}

\def\1{\mathds{1}}

\def\det{\mathrm{det}}
\def\tr{\mathrm{tr}}

\newcommand\ce[2]{H^{\times}\!\big(#1 \|   #2 \big)}

\newtheorem{observation}{Observation}[section]

\begin{document}
\frontmatter         
\pagestyle{headings}
\title{Detection of elliptical shapes via cross-entropy clustering}
\titlerunning{Cross-entropy clustering} 

\author{Jacek Tabor\inst{1} \and Krzysztof Misztal\inst{1,2}}

\authorrunning{Jacek Tabor et al.} 
\institute{
Jagiellonian University\\
Faculty of Mathematics and Computer Science\\
\L{}ojasiewicza 6, 30-348 Krak\'ow, Poland\\
{\sf tabor@ii.uj.edu.pl}\and
AGH University of Science and Technology\\
Faculty of Physics and Applied Computer Science\\
al. A. Mickiewicza 30, 30-059 Krak\'ow, Poland\\
{\sf Krzysztof.Misztal@fis.agh.edu.pl}
}

\maketitle             

\begin{abstract}
The problem of finding elliptical shapes in an image will be considered.
We discuss the solution which uses cross-entropy clustering.
The proposed method allows the search for ellipses with predefined 
sizes and position in the space. Moreover, it works
well for search of ellipsoids in higher dimensions.

\keywords{cross-entropy, MLE, EM, image processing, pattern recognition, clustering, classification}
\end{abstract}
\section{Introduction}
Ellipse detection is one of the most important problems in image processing.
It has been researched using a good variety of methods, see i.e. Tsuji and Matsumoto \cite{tsuji1978}, Davies \cite{Davies1989}. Most of the existing techniques use the Hough Transform \cite{Illingworth1988} -- that is very
memory and time consuming. 

In this paper a new approach will be presented
and its advantages and disadvantages will be discussed. 
We show the results of the algorithm on the  
pictures from Fig. \ref{fig:infull}.
\begin{figure}
  \centering
  \subfloat[]{\label{fig:t1full}\fbox{\includegraphics[width=0.3\textwidth]{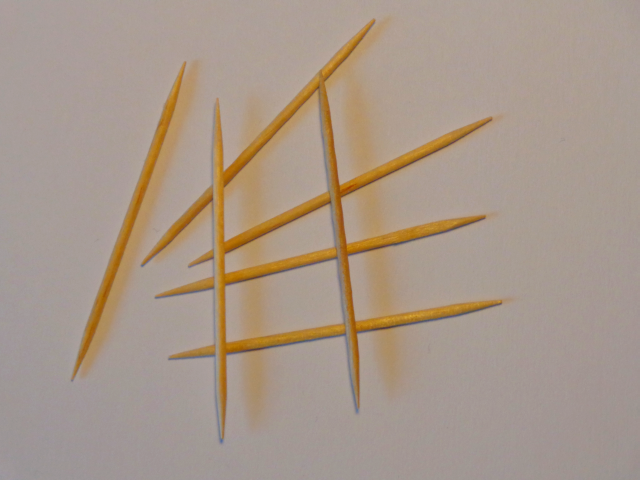}}}~
  \subfloat[]{\label{fig:t1bin}\fbox{\includegraphics[width=0.3\textwidth]{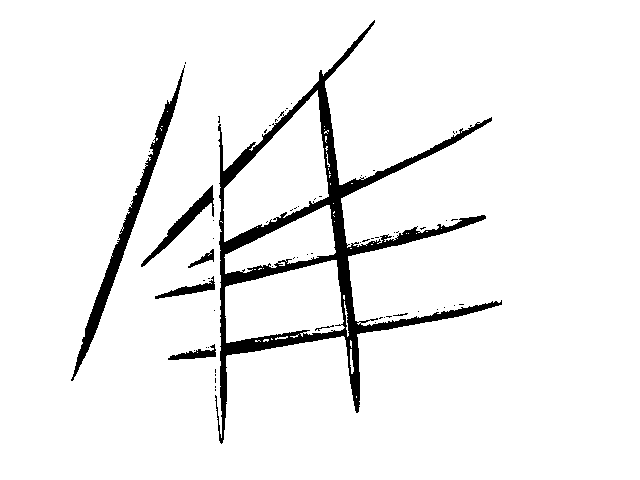}}}~
  \subfloat[]{\label{fig:t1clus}\fbox{\includegraphics[width=0.3\textwidth]{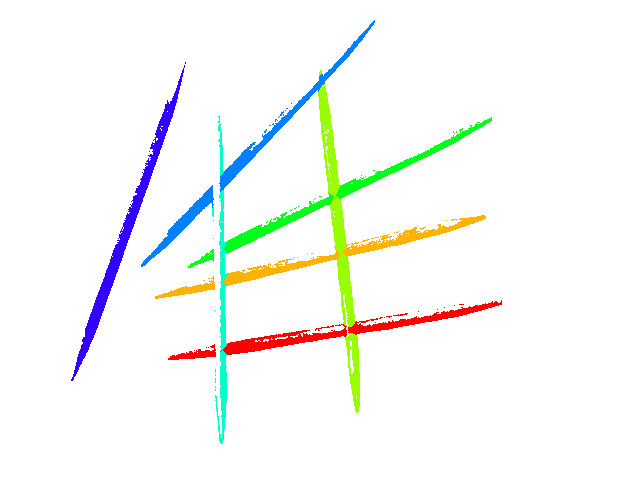}}}
  \caption{The result of our algorithm: Fig \ref{fig:t1full} -- original image, Fig \ref{fig:t1bin} -- binarized image, the input for our algorithm, Fig \ref{fig:t1full} -- outcome form algorithm, clusters marked in different colors.
}
  \label{fig:infull}
\end{figure}
The algorithm discussed in this paper:
\begin{itemize}
\item is easily adaptable, ie. if we know the expected shape 
of the object sought, or its position (orientation) in space, by little calculation we can prepare a proper configuration for its detection;
\item can detect simultaneously multiple type of objects, ex. we can look for matches 
and coins 
at the same time;
\item is rather insensitive to the disturbance of the picture (such as bluring, contrast and illumination modification, etc);
\item can be used for classification (we can detect specified shapes) and for clustering (we can use it for exploring the data structure).
\end{itemize}
The acceptable disadvantage of the presented method
is that to work well we need the beforehand knowledge
that on the picture  we study there are no other objects
than ellipse-like shapes. Consequently, our approach 
is well-adapted for example to the following tasks:
\begin{itemize}
\item count the number of ellipses on the picture;
\item divide the shapes into circles of different radiuses;
\item count the number of vertical and horizontal ellipses. 
\end{itemize}

Our idea uses a cross-entropy clustering \cite{tabor2013} (CEC), which from the practical point of view can be seen as joining of the k-means method with the model approach used in expectation maximization (EM).
EM \cite{Mc-Kr,celeux1995gaussian,mcnicholas2008parsimonious}
is one of the basic and most important applications of maximal likelihood in the density estimations \cite{Le-Ca}. 
EM, or its variations like classification EM \cite{same2007online} are often applied in clustering. Although EM approach is quite general, and gives good results, to apply it we usually need to first perform complicated 
computations. Moreover,  to accomplish the M step one commonly needs numerically consuming minimization techniques, and consequently
EM is relatively slow and cannot deal well with large data.

Our aim in this paper is to show that CEC is well-adapted to classification and detection of ellipses and ellipsoids.
The advantage of CEC over EM is simplicity and speed -- in the
case of typical Gaussian families we do not
need the M-step, which enables us in particular to
use fast and efficient Hartigans approach. Moreover, as the use of every cluster
in CEC has its cost, contrary to classification EM,
CEC reduces on-line clusters which carry no
information, which in practice implies that
our algorithm can find the ``right'' number of ellipses
on the picture.

Let us discuss the contents of the paper.
In the first part of our work we briefly describe the CEC algorithm. 
In the next section we present the basic models we use (compare with \cite{fraley1998algorithms}).
We also present results of numerical
experiments. 
Then we describe the procedure for finding toothpicks in the image (see  Fig. \ref{fig:infull}). 

In Appendix we provide the proof
of the only cross-entropy formula from section which 
is essentially new. In our opinion its proof is worth
including as in fact it given a method which can be easily
used in search for cross-entropy in other Gaussian subfamilies.

\section{Theoretical background of CEC}

In this section we give a short introduction to CEC, for 
more detailed explanation we refer the reader to \cite{tabor2013}.
To explain CEC we need to introduce the "energy function"
we want to minimize. By the cross-entropy of the probability measure $\mu$ (which represent the data-set we study) with respect to density 
$f$ we understand
$$
\ce{\mu}{f}=-\int_{\R^N} \ln f(y) \, d\mu(y).
$$
The above cross-entropy corresponds to the theoretical code-length 
of compression of $\mu$-randomly chosen element of $\R^N$ with
the code optimized for density $f$ \cite{Co-Th}.
In a more general case when one is interested in (best) coding for 
$\mu$ by densities chosen from family $\F$, we arrive at {\em the cross-entropy of $\mu$ with respect to a family of coding densities $\F$}
$$
\ce{\mu}{\F}:=\inf_{f \in \F}\ce{\mu}{f}.
$$
In the case of splitting of $\R^N$ into pairwise disjoint sets $U_1,\ldots,U_N$
such that elements of $U_i$ we "code" by optimal density from 
family $\F_i$, the mean code-length of randomly chosen element $x$ equals
\begin{equation} \label{e0}
E_{\mu}(U_1,\F_1;\ldots;U_n,\F_n):=\sum_{i=1}^k
\mu(U_i) \cdot (-\ln(\mu(U_i))+\ce{\mu_{U_i}}{\F_i}),
\end{equation}
where $\mu_{U}$ denotes the normalized restriction of $\mu$ to the
set $U$ and is given by $\mu_U(A):=\frac{1}{\mu(U)}\mu(A \cap U)$.

The aim of CEC is to find
splitting of $\R^N$ into pairwise disjoint sets $U_i$ which minimize
the function given in \eqref{e0}. In this paper we restrict for the sake of 
simplicity to clusters generated by Gaussian densities (although one can easily 
use any density family for which MLE can be performed). 


Now we proceed with discussion of the Gaussian models
we will use in CEC. We consider following density families:
\begin{enumerate}
\item $\G_{\Sigma}$ -- Gaussian densities with covariance $\Sigma$. The 
clustering will have the tendency to divide the data into clusters resembling
the unit circles in the Mahalanobis distance given by $\|x-y\|_{\Sigma}^2:=
(x-y)^T\Sigma(x-y)$. Its particular important subfamily is given by $\G_{r\I}$,
where $r>0$ is fixed (in this case we will have tendency to divide the data
into "circles" with approximate radius of $\sqrt{r}$).
\item $\G_{(\cdot I)}$ -- spherical Gaussian densities, which covariance is
proportional to identity. The clustering will try to divide the data into
circles of arbitrary sizes.
\item $\G_{\mathrm{diag}}$ -- Gaussians with diagonal covariance. The 
clustering will try to divide the data into ellipsoid with radiuses 
parallel to coordinate axes.
\item $\G$ -- all Gaussian densities. In this case we divide dataset into ellipsoid-like clusters without any preferences
concerning the size or shape or position in space of the ellipsoid.
\end{enumerate}

We need a result which says what is the cross-entropy of the probability measure $\mu$ with respect to coding adapted for the 
respective Gaussian subfamilies. A basic role is played by the following
observation.

\begin{observation} \label{ob}
Let $\mu$ be a discrete or continuous probability measure 
in $\R^N$ with well-defined
mean $m_{\mu}:=\int xd\mu(x)$ and covariance matrix $\Sigma_{\mu}:=
\int (x-m_{\mu}) (x-m_{\mu})^Td\mu(x)$.
Let a fixed positive-definite symmetric matrix $\Sigma$ be given.

Then $\ce{\mu}{\G_{\Sigma}}=H^{\times}\big(\mu_{\G}\|\Nor{(\m_{\mu},\Sigma)}\big)$,
where $\mu_\G$ denotes the probability measure with Gaussian density
of the same mean and covariance as $\mu$. Consequently
\begin{equation} \label{e1}
\ce{\mu}{\G_{\Sigma}}=\frac{N}{2} \ln(2\pi)+\frac{1}{2}\tr(\Sigma^{-1}\Sigma_{\mu})+\frac{1}{2}\ln \det(\Sigma).
\end{equation}
\end{observation}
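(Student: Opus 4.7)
The plan is to unfold the definition $\ce{\mu}{\G_\Sigma}=\inf_{m\in\R^N}\ce{\mu}{\Nor(m,\Sigma)}$ and compute the inner cross-entropy directly from the formula for the Gaussian density. Writing
$$-\ln f_{m,\Sigma}(y)=\frac{N}{2}\ln(2\pi)+\frac{1}{2}\ln\det(\Sigma)+\frac{1}{2}(y-m)^T\Sigma^{-1}(y-m),$$
integration against $\mu$ gives
$$\ce{\mu}{\Nor(m,\Sigma)}=\frac{N}{2}\ln(2\pi)+\frac{1}{2}\ln\det(\Sigma)+\frac{1}{2}\int(y-m)^T\Sigma^{-1}(y-m)\,d\mu(y).$$
Only the last term depends on $m$, so the problem reduces to minimizing this quadratic moment over $m$.

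The key computational step is the standard "split and trace" trick. I would write $y-m=(y-m_\mu)+(m_\mu-m)$, expand the quadratic form, and observe that the cross term $\int (y-m_\mu)^T\Sigma^{-1}(m_\mu-m)\,d\mu(y)$ vanishes by the definition of $m_\mu$. The remaining pure-quadratic term in $y-m_\mu$ is rewritten using the cyclic property of the trace:
$$\int(y-m_\mu)^T\Sigma^{-1}(y-m_\mu)\,d\mu(y)=\int\tr\bigl(\Sigma^{-1}(y-m_\mu)(y-m_\mu)^T\bigr)\,d\mu(y)=\tr(\Sigma^{-1}\Sigma_\mu),$$
while the constant term $(m_\mu-m)^T\Sigma^{-1}(m_\mu-m)$ is nonnegative (since $\Sigma^{-1}$ is positive definite) and vanishes iff $m=m_\mu$. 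Hence the infimum defining $\ce{\mu}{\G_\Sigma}$ is attained at $m=m_\mu$, yielding formula~\eqref{e1}.

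For the first assertion, I would just observe that the whole argument above used $\mu$ only through its mean $m_\mu$ and covariance $\Sigma_\mu$; in particular, if we replace $\mu$ by $\mu_\G$ (which has the same two moments) we obtain the same value, and the optimal code in both cases is $\Nor(m_\mu,\Sigma)$. This gives $\ce{\mu}{\G_\Sigma}=\ce{\mu_\G}{\Nor(m_\mu,\Sigma)}$.

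There is no real obstacle here: everything is a direct calculation from the definition of cross-entropy and the density of a Gaussian. The only point requiring a small amount of care is the trace identity (recognising $\int(y-m_\mu)(y-m_\mu)^T d\mu=\Sigma_\mu$ and pulling $\Sigma^{-1}$ outside the integral), together with the justification that $m=m_\mu$ is the unique minimizer, which follows from the positive-definiteness of $\Sigma$.
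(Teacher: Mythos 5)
Your proof is correct and complete: reducing $\ce{\mu}{\G_{\Sigma}}$ to minimizing $m \mapsto \int (y-m)^T\Sigma^{-1}(y-m)\,d\mu(y)$, splitting $y-m=(y-m_\mu)+(m_\mu-m)$ so the cross term vanishes, applying the cyclic trace identity to get $\tr(\Sigma^{-1}\Sigma_\mu)$, and noting that the residual term forces $m=m_\mu$ by positive-definiteness yields exactly \eqref{e1}, while the remark that the whole computation sees $\mu$ only through $(m_\mu,\Sigma_\mu)$ correctly establishes the identification $\ce{\mu}{\G_{\Sigma}}=\ce{\mu_\G}{\Nor(m_\mu,\Sigma)}$. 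Note that the paper itself states this Observation without proof (a footnote defers to the formulas in the cross-entropy clustering reference, and the Appendix proves only Proposition \ref{bas}, the eigenvalue-constrained trace minimization), so your argument is precisely the standard calculation the authors take as known, and there is no divergence of method to report.
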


By applying the above proposition one can easily 
deduce\footnote{In practice all the formulas given in the 
are known, see for example \cite{tabor2013}.} 
the formulas for cross-entropy given the Table \ref{tab1:cec}.

\vspace{-2ex}

\begin{table}\centering
\begin{tabular}{||l|l|l||} \hline \hline
$\F$ &  cov. matrix & $\ce{\mu}{\F}$   \\[0.5ex] 
\hline \hline

$\G_{\Sigma}$ & $\Sigma$ & $\frac{N}{2} \ln(2\pi)+\frac{1}{2}\tr(\Sigma^{-1}\Sigma_{\mu})+\frac{1}{2}\ln \det(\Sigma)$
\\[0.5ex] \hline

$\G_{r\I}$ & $r\I$ &
$\frac{N}{2}\ln(2\pi)+\frac{1}{2r}\tr(\Sigma_{\mu})+\frac{N}{2}\ln r$ \\[0.5ex] \hline

$\G_{(\cdot\I)}$ & $\frac{\tr(\Sigma_{\mu})}{N} \I$ & $\frac{N}{2}\ln(2\pi e/N)+\frac{N}{2}\ln (\tr \Sigma_\mu)$ \\[0.5ex] \hline

$\G_{\mathrm{diag}}$ & $\mathrm{diag}(\Sigma)$ & $\frac{N}{2}\ln(2\pi e)+\frac{1}{2}\ln(\det(\mathrm{diag}(\Sigma_\mu)))$ \\[0.5ex]
\hline


$\G$ & $\Sigma_{\mu}$ & $\frac{N}{2}\ln(2\pi e)+\frac{1}{2}\ln \det(\Sigma_{\mu})$ \\[0.5ex] \hline \hline
\end{tabular}
\caption{Table of cross-entropy formulas with respect to Gaussian subfamilies.}
\label{tab1:cec}
\end{table}

\vspace{-3ex}

In the second column we give the formula for the 
covariance matrix of the Gaussian density which realizes the desired minimum of cross-entropy (obviously the mean is always the mean of the measure).
Simple applications of the formulas given above can be found on the Figure
\ref{fig:ir}.

\setlength\fboxsep{0pt}
\setlength\fboxrule{1pt}
\begin{figure}
  \centering
  \subfloat[]{\label{fig:p1in}\fbox{\includegraphics[width=0.25\textwidth]{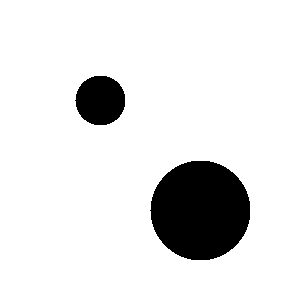}}}
  \subfloat[]{\label{fig:p1out}\fbox{\includegraphics[width=0.25\textwidth]{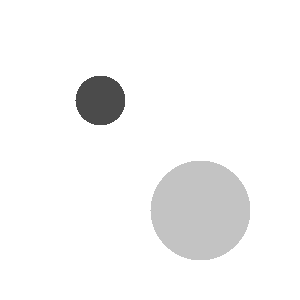}}}
  \subfloat[]{\label{fig:p3in}\fbox{\includegraphics[width=0.25\textwidth]{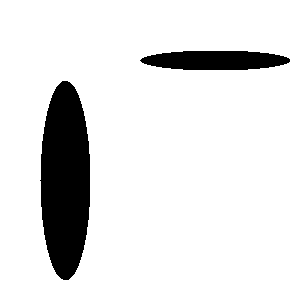}}}
  \subfloat[]{\label{fig:p3out}\fbox{\includegraphics[width=0.25\textwidth]{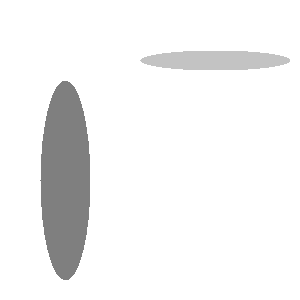}}}
  \caption{The simplest case: input and outcome for our algorithm applied to $\G_{r\I}$ (Fig. \ref{fig:p1in} and \ref{fig:p1out})  and $\G_{\mathrm{diag}}$ (Fig. \ref{fig:p3in} and \ref{fig:p3out}).}
  \label{fig:ir}
\end{figure}


\section{Case study}

Let us explain the method on the following simple problem:
assume that we want to count the toothpicks
on the Fig. \ref{fig:irisx}. To do so we take a particular object
and compute its covariance matrix. We have obtained 
a covariance with eigenvalues
$$
\lambda_1=4938.5	 \mbox{ and } \lambda_2=5.7.
$$
Since we want to allow the toothpick to have any position in space,
we introduce the set $\G_{\lambda_1,\lambda_2}$ to consist
of all Gaussian densities on the plane with covariance
matrix having eigenvalues $\lambda_1$
and $\lambda_2$ (observe that this set is rotation and translation 
invariant, but not scale invariant). 

Consider now a probability measure $\mu$, representing our data,
with covariance $\Sigma_{\mu}$, with eigenvalues $\lambda_1^{\mu}>\lambda_2^{\mu}>0$.
By applying Proposition \ref{bas} (see Appendix) jointly 
with Observation \ref{ob} we easily conclude
that the best approximation
(understood in the maximal likelihood or equivalently cross-entropy, sense)
of $\mu$ in $\G_{\lambda_1,\lambda_2}$ is given by
the Gaussian density with covariance matrix with the same
eigenvectors as $\Sigma_{\mu}$ and eigenvalues $\lambda_1$
and $\lambda_2$.  
\begin{figure}
  \centering
  \subfloat[]{\label{fig:15full}\fbox{\includegraphics[width=0.35\textwidth]{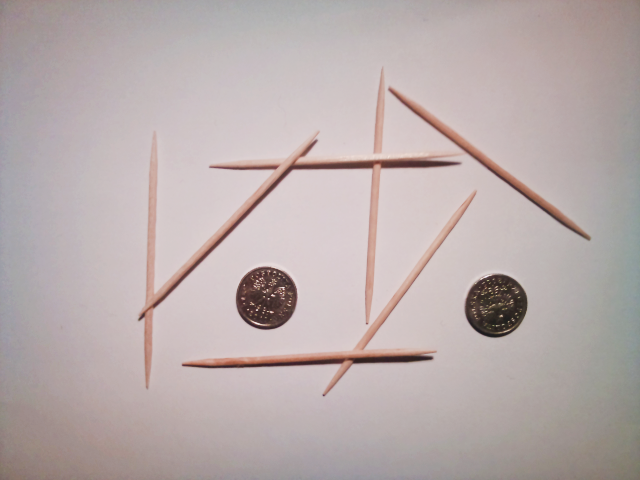}}}~
  \subfloat[]{\label{fig:15bin}\fbox{\includegraphics[width=0.35\textwidth]{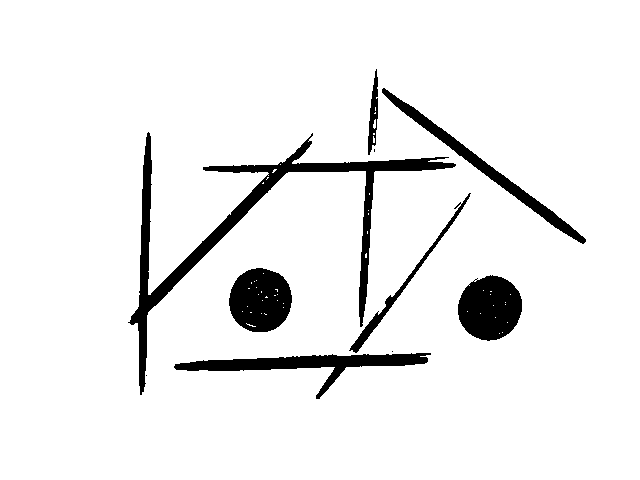}}}\\
  \subfloat[]{\label{fig:15clus}\fbox{\includegraphics[width=0.35\textwidth]{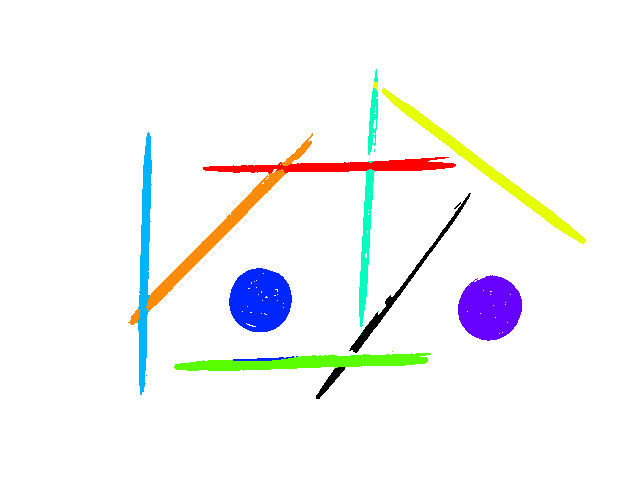}}}~
  \subfloat[]{\label{fig:15el}\fbox{\includegraphics[width=0.35\textwidth]{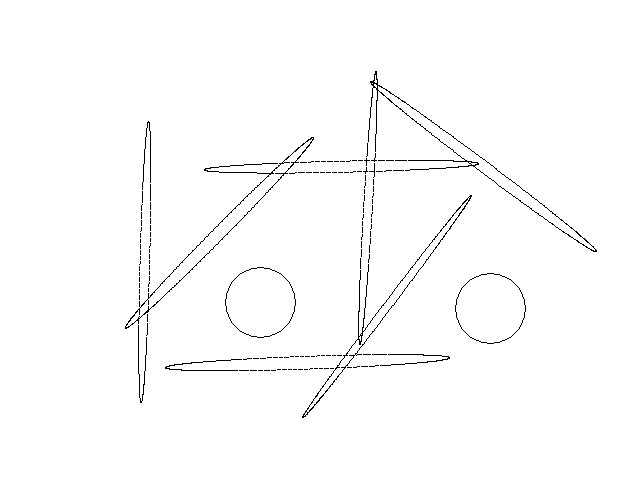}}}
  \caption{The result of our algorithm: Fig \ref{fig:15full} -- original image, Fig \ref{fig:15bin} -- binarized image, the input for our algorithm, Fig \ref{fig:15clus} -- outcome form algorithm, clusters marked in different colors, Fig \ref{fig:15el} -- outcome form algorithm, ellipses with the same mean and covariance as calculated by algorithm densities.}
  \label{fig:irisx}
\end{figure}
Consequently, the cross-entropy, which plays the role of energy,
$\ce{\mu}{\G_{\lambda_1,\lambda_2}}$ thanks to \eqref{e1} is
given by
$$
\ce{\mu}{\G_{\lambda_1,\lambda_2}}=\frac{N}{2} \ln(2\pi)+\frac{1}{2}
(\lambda_1^{\mu}/\lambda_1+\lambda_2^{\mu}/\lambda_2)+\frac{1}{2}(\ln(\lambda_1)+\ln(\lambda_2)).
$$
By applying Hartigan approach we can now find the splitting of the data into pairwise disjoint sets $U_1,\ldots,U_k$ which minimizes
the value of \eqref{e0}. Results of our method can be seen on Figure
\ref{fig:irisx} (we omit here the natural preliminary binarization procedure).

To visualize the found clusters, we 
draw the boundary of an ellipse with the same mean and covariance as a given 
density estimator\footnote{We recall that covariance matrix of a uniform 
density of an ellipse with radiuses $r_1,r_2$ is given by $[r_1^2/4,0;0,r_2^2/4]$, that is we draw the ellipse with radiuses $2\sqrt{\lambda_i}$.}.

\section{Conclusion}

We have proposed a new method, which uses
cross-entropy clustering approach, to classification and detection
of ellipse-like shapes. The main advantage of the 
method lies in the fact that it can be easily adapted
to finding ellipses of desired shape and position in space.
The basic disadvantage is that in current algorithm configuration (basic approach) we can deal only with
pictures which contain only ellipse-like shapes (for example
we cannot discover ellipses in a picture with ellipses and rectangles). Our further work will consist on elimination of this inconvenience.

\section{Appendix: how to compute MLE for Gaussian families}

The situation is very simple if we search for the MLE, or in other words
for the minimum in \eqref{e1} in the class of diagonal matrices (subclass consisting of Gaussians with independent variables). A more 
requiring and difficult question is to find the desired
minimum in the class of all Gaussians. Below we present 
an approach which allows to do this.

We will use the well-known von Neumann trace inequality \cite{Gr,Mi}:

\medskip

\noindent{\bf Theorem [von Neumann trace inequality].} {\em Let $E,F$ be complex $N \times N$ matrices. Then
\begin{equation} \label{neu}
|\tr(EF) | \leq \sum_{i=1}^N s_i(E)\cdot s_i(F),
\end{equation}
where $s_i(D)$ denote the ordered (decreasingly)
singular values of matrix $D$.}

\medskip

Let us recall that for the symmetric positive matrix its
eigenvalues coincide with singular values.

Given $\lambda_1,\ldots,\lambda_N \in \R$ by $S_{\lambda_1,\ldots,\lambda_N}$ we denote the set of all symmetric matrices with eigenvalues $\lambda_1,\ldots,\lambda_N$. The following proposition plays
the basic role in the search for optimal Gaussian densities,
as it reduces the search from all symmetric matrices to search
in the set of eigenvalues. Since its proof is short, we provide it for the sake of completeness.

\begin{proposition} \label{bas}
Let $B$ be a symmetric nonnegative matrix with eigenvalues $\beta_1 \geq \ldots \geq\beta_N \geq 0$ .
Let $0 \leq \lambda_1 \leq \ldots \leq \lambda_N$ be fixed.
Then 
$$
\min_{A \in S_{\lambda_1,\ldots,\lambda_N}} \tr(AB)=\sum_i \lambda_i \beta_i.
$$ 
\end{proposition}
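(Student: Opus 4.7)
The plan is to deduce a lower bound from the von Neumann trace inequality already recalled just above the statement, and then exhibit an $A$ that attains it. The subtle point is that von Neumann's inequality is an \emph{upper} bound matching eigenvalues in the same (decreasing) order, whereas here we want the \emph{minimum}, which should pair the largest $\beta_i$ with the smallest $\lambda_i$. The standard trick is to flip the signs of the eigenvalues of $A$ by a suitable shift.

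First I would set $L := \lambda_N = \max_i \lambda_i$ and, for an arbitrary $A \in S_{\lambda_1,\ldots,\lambda_N}$, define $A' := L\cdot I - A$. Then $A'$ is symmetric and its eigenvalues are $L-\lambda_N \leq L-\lambda_{N-1} \leq \ldots \leq L-\lambda_1$, all nonnegative, so in particular they coincide with its singular values listed in decreasing order as $L-\lambda_1,\ldots,L-\lambda_N$. The identity $\tr(A'B) = L\,\tr(B) - \tr(AB)$ is immediate.

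Next I would apply the von Neumann trace inequality \eqref{neu} to $A'$ and $B$, using that both are symmetric nonnegative so singular values equal eigenvalues. This yields
$$
\tr(A'B) \leq \sum_{i=1}^N (L-\lambda_i)\,\beta_i = L\,\tr(B) - \sum_{i=1}^N \lambda_i\beta_i.
$$
Combining with the identity above, the $L\,\tr(B)$ cancels and one obtains the lower bound $\tr(AB) \geq \sum_i \lambda_i \beta_i$, valid for every $A \in S_{\lambda_1,\ldots,\lambda_N}$.

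Finally, I would show this lower bound is attained. Diagonalize $B = Q\,\diag(\beta_1,\ldots,\beta_N)\,Q^T$ with $Q$ orthogonal and $\beta_i$ in decreasing order along the diagonal, and set $A_0 := Q\,\diag(\lambda_1,\ldots,\lambda_N)\,Q^T$, with $\lambda_i$ in increasing order. Then $A_0 \in S_{\lambda_1,\ldots,\lambda_N}$ and $A_0 B = Q\,\diag(\lambda_i\beta_i)\,Q^T$, so $\tr(A_0 B) = \sum_i \lambda_i\beta_i$. The main obstacle is really just the conceptual one of turning the ``same-order'' upper bound of von Neumann into the ``opposite-order'' lower bound we want; once the shift $A \mapsto LI-A$ is in place, everything else is a direct computation.
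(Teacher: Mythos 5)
Your proof is correct and follows essentially the same route as the paper's: both obtain the lower bound by applying the von Neumann trace inequality to the shifted matrix $\lambda_N \I - A$ (so that the same-order upper bound becomes the desired opposite-order lower bound), and both attain the minimum with the matrix diagonal in the eigenbasis of $B$ that pairs the increasing $\lambda_i$ with the decreasing $\beta_i$. Your version merely makes explicit a couple of details the paper leaves implicit, such as the decreasing ordering of the singular values $\lambda_N-\lambda_1 \geq \ldots \geq \lambda_N-\lambda_N$ of the shifted matrix.
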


\begin{proof}
Let $e_i$ denote the orthogonal basis build from the eigenvectors of $B$, and let 
operator $\bar A$ be defined in this base by $\bar A(e_i)=\lambda_i e_i$. Then trivially 
$$
\min_{A \in S_{\lambda_1,\ldots,\lambda_N}} \tr(AB) \leq
\tr(\bar AB)=\sum_i \lambda_i \beta_i.
$$

To prove the inverse inequality we will use the
von Neumann trace inequality. Let $A \in S_{\lambda_1,\ldots,\lambda_N}$ be arbitrary. We apply the inequality \eqref{neu} for $E=\lambda_N \I-A$, $F=B$.
Since $E$ and $F$ are symmetric nonnegatively defined matrices, their eigenvalues
$\lambda_N-\lambda_i$ and $\beta_i$ coincide with singular values, and therefore by \eqref{neu}
\begin{equation} \label{nu2}
\tr((\lambda_N\I-A)B) \leq \sum_i(\lambda_N-\lambda_i)\beta_i=
\lambda_N \sum_i \beta_i -\sum_i \lambda_i \beta_i.
\end{equation}
Since $\tr((\lambda_N\I-A)B)=\lambda_N \sum_i \beta_i -\tr(AB)$,
from inequality \eqref{nu2} we obtain that
$\tr(AB) \geq \sum_i \lambda_i \beta_i$.
\end{proof}

\begin{corollary}
Assume that we want to find the best fit of $\mu$ 
with covariance $\Sigma_{\mu}$ in the class $\G_{\lambda_1,
\ldots,\lambda_n}$, where $\lambda_1 \geq \ldots \geq \lambda_n>0$. 

To do so we take the eigenvalues $\lambda_1^{\mu} \geq \ldots 
\geq \lambda_n^{\mu}$ corresponding to orthonormal eigenvectors
$e_1^{\mu},\ldots,e_n^{\mu}$, and then $\Sigma$ is given
in the base as a diagonal matrix with $\lambda_1,\ldots,\lambda_n$ on 
the diagonal.
\end{corollary}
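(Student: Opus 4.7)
The plan is to reduce the corollary to a direct application of Proposition \ref{bas} through Observation \ref{ob}. For any symmetric positive-definite $\Sigma$ we have
$$\ce{\mu}{\G_{\Sigma}} = \tfrac{N}{2}\ln(2\pi) + \tfrac{1}{2}\tr(\Sigma^{-1}\Sigma_\mu) + \tfrac{1}{2}\ln\det(\Sigma).$$
Restricting $\Sigma$ to the class of symmetric matrices with prescribed eigenvalues $\lambda_1 \geq \ldots \geq \lambda_n > 0$, the determinant term reduces to the constant $\sum_i \ln\lambda_i$. Thus finding the best fit of $\mu$ in $\G_{\lambda_1,\ldots,\lambda_n}$ is equivalent to minimizing the map $\Sigma \mapsto \tr(\Sigma^{-1}\Sigma_\mu)$ over symmetric matrices $\Sigma$ with those prescribed eigenvalues.

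The key observation is that $\Sigma \in S_{\lambda_1,\ldots,\lambda_n}$ if and only if $\Sigma^{-1}$ is symmetric with eigenvalues $1/\lambda_n \leq \ldots \leq 1/\lambda_1$, automatically arranged in the increasing order required by Proposition \ref{bas}. Applying that proposition with $A := \Sigma^{-1}$ and $B := \Sigma_\mu$ (whose eigenvalues $\lambda_1^\mu \geq \ldots \geq \lambda_n^\mu$ are already decreasingly ordered), one obtains
$$\min_{\Sigma \in S_{\lambda_1,\ldots,\lambda_n}} \tr(\Sigma^{-1}\Sigma_\mu) = \sum_{i=1}^n \frac{\lambda_i^\mu}{\lambda_i}.$$
Moreover, the explicit construction in the proof of Proposition \ref{bas} tells us which $A$ achieves the minimum: one takes $\Sigma^{-1}$ to be diagonal in the eigenbasis $(e_i^\mu)$ of $\Sigma_\mu$, attaching the eigenvalue $1/\lambda_i$ to the eigenvector $e_i^\mu$ (so that the smallest $1/\lambda_i$ is paired with the largest $\lambda_i^\mu$). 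Inverting, the optimal $\Sigma$ is diagonal in the basis $(e_i^\mu)$ with $\lambda_i$ at position $i$, which is exactly the prescription stated in the corollary.

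The only subtle point is bookkeeping for orderings: Proposition \ref{bas} is stated with the prescribed eigenvalues in increasing order, while the corollary lists $\lambda_i$ in decreasing order. This mismatch is precisely absorbed by passing from $\Sigma$ to $\Sigma^{-1}$, which reverses the order of eigenvalues; once this switch is made, the rearrangement-type conclusion of Proposition \ref{bas} applies verbatim, and no further computation is required.
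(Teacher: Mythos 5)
Your proof is correct and takes essentially the same route the paper intends for this corollary (which it states without proof, but derives in the case-study section exactly this way): by Observation \ref{ob} the term $\frac{1}{2}\ln\det(\Sigma)$ is constant on the class, so the problem reduces to minimizing $\tr(\Sigma^{-1}\Sigma_\mu)$, handled by Proposition \ref{bas} with $A=\Sigma^{-1}$, $B=\Sigma_\mu$. One notational slip: since $\lambda_1 \geq \ldots \geq \lambda_n > 0$, the inverted eigenvalues satisfy $1/\lambda_1 \leq \ldots \leq 1/\lambda_n$, not $1/\lambda_n \leq \ldots \leq 1/\lambda_1$ as you wrote --- though your parenthetical pairing (smallest $1/\lambda_i$ with largest $\lambda_i^\mu$) and the resulting minimum $\sum_i \lambda_i^\mu/\lambda_i$ are correct, so this is harmless.
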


%
%


\begin{thebibliography}{10}

\bibitem{celeux1995gaussian}
Celeux, G. and Govaert, G.:
\newblock Gaussian parsimonious clustering models.
\newblock Pattern Recognition, 28(5):781--793, 1995.

\bibitem{Co-Th}
Cover, T.M. and Thomas, J.A. and Wiley, J. et~al.:
\newblock Elements of information theory, volume~6.
\newblock Wiley Online Library, 1991.

\bibitem{Davies1989}
Davies, E.R.:
\newblock Finding ellipses using the generalised Hough transform.
\newblock Pattern Recognition Letters, 9 (2), 87--96, 1989.

\bibitem{fraley1998algorithms}
Fraley, C.:
\newblock Algorithms for model-based Gaussian hierarchical clustering.
\newblock SIAM Journal on Scientific Computing, 20(1):270--281, 1998.

\bibitem{fimg}
Free Stock Images.
\url{http://www.stockfreeimages.com/}

\bibitem{Gr}
Gr{\"u}nwald, P.D. and Myung, I.J. and Pitt, M.A.:
\newblock Advances in minimum description length: Theory and applications.
\newblock the MIT Press, 2005.

\bibitem{Illingworth1988}
Illingworth, J. and Kittler, J.:
\newblock A survey of the Hough transform
Computer vision, graphics, and image processing, 44 (1), 87--116, 1988.

\bibitem{Le-Ca}
Lehmann, E.L. and Casella, G.:
\newblock Theory of point estimation, volume~31.
\newblock Springer Verlag, 1998.

\bibitem{Mc-Kr}
McLachlan, G.J. and Krishnan, T.:
\newblock The EM algorithm and extensions.
\newblock 274, Wiley New York, 1997.

\bibitem{mcnicholas2008parsimonious}
Mcnicholas, P.D. and Murphy, T.B.:
\newblock Parsimonious gaussian mixture models.
\newblock Statistics and Computing, 18(3):285--296, 2008.

\bibitem{Mi}
Mirsky, L.:
\newblock A trace inequality of John von Neumann.
\newblock Monatsh. Math., 79 (4):303--306, 1975.

\bibitem{same2007online}
Sam{\'e}, A. and Ambroise, C. and Govaert, G.:
\newblock An online classification em algorithm based on the mixture model.
\newblock Statistics and Computing, 17(3):209--218, 2007.

\bibitem{tabor2013}
Tabor, J. and Spurek, P.:
\newblock Cross-entropy clustering.
\newblock Available from \url{http://arxiv.org/pdf/1210.5594.pdf}, 2012.

\bibitem{tsuji1978}
Tsuji, S. and Matsumoto,  F.:
\newblock Detection of ellipses by a modified Hough transformation
\newblock IEEE Trans. Comput., C-27 (1978), pp. 777--781


\end{thebibliography}
\end{document}